\newtheorem{proposition}{Proposition}
\begin{document}

%%
%% The "title" command has an optional parameter,
%% allowing the author to define a "short title" to be used in page headers.
\title{Divide-Then-Rule: A Cluster-Driven Hierarchical Interpolator for Attribute-Missing Graphs}

%%
%% The "author" command and its associated commands are used to define
%% the authors and their affiliations.
%% Of note is the shared affiliation of the first two authors, and the
%% "authornote" and "authornotemark" commands
%% used to denote shared contribution to the research.
% \author{Ben Trovato}
% \authornote{Both authors contributed equally to this research.}
% \email{trovato@corporation.com}
% \orcid{1234-5678-9012}

\author{Yaowen Hu}
% \authornotemark[1]
%\email{yaowenhu@nudt.edu.cn}
\orcid{0000-0003-2081-9358}
\affiliation{
    \department{$^1$College of Computer Science and Technology}
   \institution{$^1$National University of Defense Technology}
   \institution{$^2$Xiangjiang Laboratory}
   \city{Changsha}
   \country{China}
}
\email{yaowenhu@nudt.edu.cn}

\author{Wenxuan Tu}
\orcid{0000-0002-1353-2968}
\authornote{Corresponding authors.}
\affiliation{
  \department{School of Computer Science and Technology}
  \institution{Hainan University}
  \city{Haikou}
  \country{China}}
\email{wenxuantu@163.com}

\author{Yue Liu}
% \authornotemark[1]
%\email{yaowenhu@nudt.edu.cn}
\orcid{0000-0002-9894-0062}
\affiliation{
    \department{College of Computer Science and Technology}
   \institution{National University of Defense Technology}
   \city{Changsha}
   \country{China}
}

\author{Miaomiao Li}
% \authornotemark[1]
\orcid{0000-0001-7678-687X}
\affiliation{
   \institution{Changsha University}
   \city{Changsha}
   \country{China}
}

\author{Wenpeng Lu}
% \authornotemark[1]
%\email{yaowenhu@nudt.edu.cn}
\orcid{0000-0002-1840-3540}
\affiliation{
    \department{Key Laboratory of Computing Power Network and Information Security}
   \institution{Qilu University of Technology}
   \city{Jinan}
   \country{China}
}

\author{Zhigang Luo}
% \authornotemark[1]
%\email{yaowenhu@nudt.edu.cn}
\orcid{0000-0002-7552-201X}
\affiliation{
   \institution{Xiangjiang Laboratory}
   \city{Changsha}
   \country{China}
}

\author{Xinwang Liu}
\authornotemark[1]
\orcid{0000-0001-9066-1475}
%\email{yaowenhu@nudt.edu.cn}
\affiliation{
\department{College of Computer Science and Technology}
   \institution{National University of Defense Technology}
   \city{Changsha}
   \country{China}
}
\email{xinwangliu@nudt.edu.cn}

\author{Ping Chen}
\orcid{0000-0003-0684-9665}
% \authornotemark[1]
%\email{yaowenhu@nudt.edu.cn}
\affiliation{
\department{School of Information and Communication Engineering}
   \institution{North University of China}
   \city{Taiyuan}
   \country{China}
}

\begin{abstract}
Deep graph clustering (DGC) for attribute-missing graphs is an unsupervised task aimed at partitioning nodes with incomplete attributes into distinct clusters. Existing imputation methods for attribute-missing graphs often fail to account for the varying amounts of information available across node neighborhoods, leading to unreliable results. To address this issue, we propose a novel method named Divide-Then-Rule Graph Completion (DTRGC). This method first addresses nodes with sufficient known neighborhood information and treats the imputed results as new knowledge to iteratively impute more challenging nodes, while leveraging clustering information to correct imputation errors. Specifically, Dynamic Cluster-Aware Feature Propagation initializes missing node attributes by adjusting propagation weights based on the clustering structure. Subsequently, Hierarchical Neighborhood-Aware Imputation categorizes attribute-missing nodes into three groups based on the completeness of their neighborhood attributes. The imputation is performed hierarchically, prioritizing the groups with nodes that have the most available neighborhood information. The cluster structure is then used to refine the imputation and correct potential errors. Finally, Hop-wise Representation Enhancement integrates information across multiple hops, thereby enriching the expressiveness of node representations. Experimental results on 6 widely used graph datasets show that DTRGC significantly improves the clustering performance of various DGC methods under attribute-missing graphs.
\end{abstract}

%%
%% The code below is generated by the tool at http://dl.acm.org/ccs.cfm.
%% Please copy and paste the code instead of the example below.
%%
\begin{CCSXML}
<ccs2012>
   <concept>
       <concept_id>10010147.10010257.10010321</concept_id>
       <concept_desc>Computing methodologies~Machine learning algorithms</concept_desc>
       <concept_significance>300</concept_significance>
       </concept>
 </ccs2012>
\end{CCSXML}

\ccsdesc[300]{Computing methodologies~Machine learning algorithms}

%%
%% Keywords. The author(s) should pick words that accurately describe
%% the work being presented. Separate the keywords with commas.
\keywords{Deep Graph Clustering, Attribute-Missing Graph, Graph Completion, Hierarchical Imputation}
%% A "teaser" image appears between the author and affiliation
%% information and the body of the document, and typically spans the
%% page.
% \received{20 February 2007}
% \received[revised]{12 March 2009}
% \received[accepted]{5 June 2009}
%%
%% This command processes the author and affiliation and title
%% information and builds the first part of the formatted document.
\maketitle

\section{Introduction}
Deep Graph Clustering (DGC) \cite{liuyue_deep_graph_clustering_survey,9944925,yin2024continuous,ju2024survey,yin2024dynamic} is a fundamental task in unsupervised learning that aims to cluster nodes based on graph structures, grouping similar nodes together without reliance on labels. 
In recent years, DGC methods have achieved notable success in various practical applications, including community detection, recommendation systems \cite{yin2023dream,liuyue_ITR,Darec,yin2023omg}, and facial recognition \cite{yin2023messages,park2022cgc,pang2023sa}. Existing efforts are typically based on contrastive learning \cite{yoo2022accurate,tu2022initializing} or reconstruction models \cite{yin2023coco,li2023csat}. These methods either maximize the similarity between nodes in the same cluster or reconstruct node features to enhance clustering effectiveness \cite{Graphlearner,yin2022deal}.

Existing DGC methods operate under a crucial assumption that the attributes of all nodes are complete and reliable \cite{yin2022dynamic, yang2023cluster,ai2023gcn,yu2024dvsai,Wan24Fast,yin2022generic}. However, in real-world scenarios, issues like privacy protection and copyright restrictions during data collection often lead to incomplete node attributes. Attribute-missing graphs typically manifest in two patterns: (1) Uniform missing, where some attributes of each node are partially missing \cite{huo2023t2,shou2023adversarial}; and (2) Structural missing, where all attributes of certain nodes are entirely missing \cite{tang2024merging,yinsport}. Although considerable attention has been devoted to imputing attribute-missing graphs \cite{zhang2022would, tu2024attribute}, most of these studies encounter significant challenges in clustering tasks: (1) Semi-supervised imputation methods rely heavily on partial label information and cannot be directly applied to clustering. For example, label propagation (LP) \cite{zhang2023investigating} propagates known labels to unlabeled nodes via neighboring nodes, making it unsuitable for fully unsupervised clustering tasks. (2) Plug-and-play imputation methods often lack a mechanism to account for clustering structures. For instance, feature propagation (FP) \cite{meng2024deep} propagates known features across edges between nodes without regard to cluster membership, maintaining the same message-passing (MP) intensity regardless of whether two nodes belong to the same cluster. 

\begin{figure*}[htbp]
\centering
\includegraphics[width=0.62\linewidth]{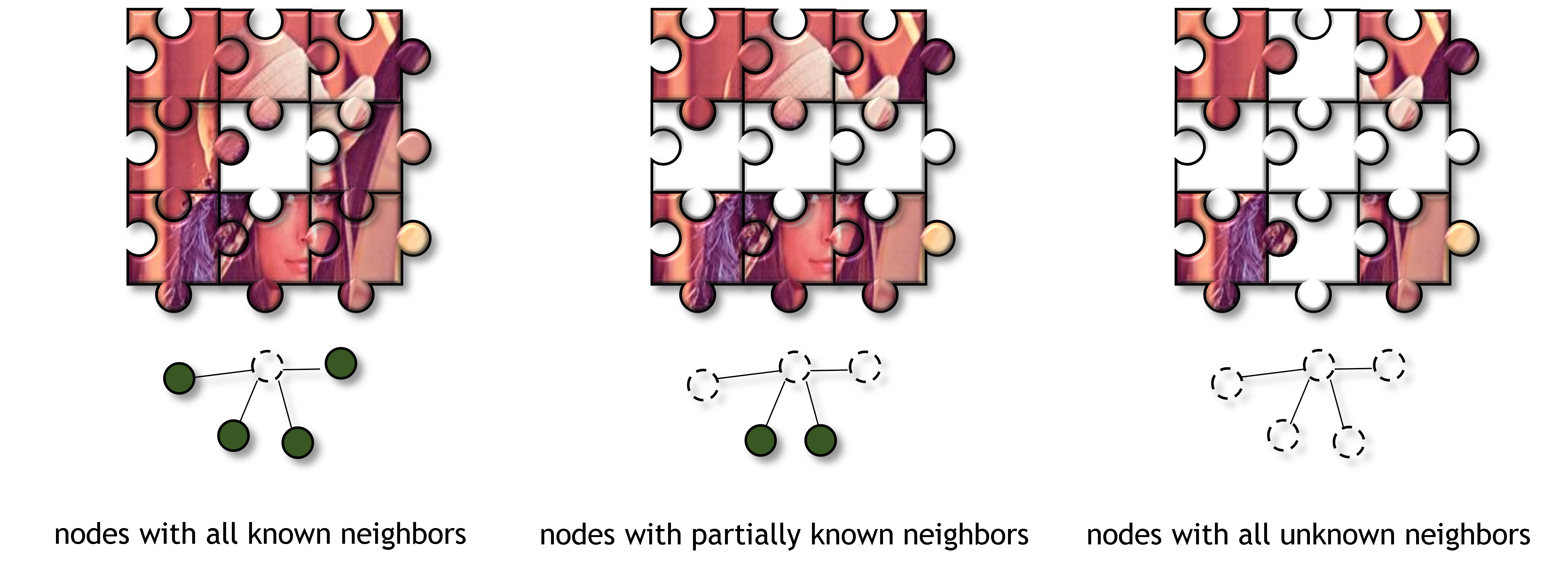}
\caption{Illustration of the hierarchical differentiation of nodes based on the completeness of their neighborhood attributes. The figure employs different jigsaw puzzle states to represent three categories of nodes: ``nodes with all known neighbors'' are akin to well-defined edge pieces of a jigsaw puzzle; ``nodes with partially known neighbors'' correspond to jigsaw puzzle regions that are partially assembled; and ``nodes with all unknown neighbors'' resemble blank sections lacking sufficient contextual information. }
\label{figure1}
\end{figure*}

Despite the fact that the aforementioned methods improve clustering performance on attribute-missing graphs through imputation, they share a critical limitation: \textit{These methods fail to account for differences in neighborhood information and therefore produce unreliable results for nodes with insufficient known information.} For example, when ``nodes with all unknown neighbors'' are interconnected, the imputation results for the internal missing nodes become unreliable. To overcome these limitations, we draw inspiration from the principle of Prägnanz in Gestalt psychology \cite{van2024pragnanz}. Specifically, when humans encounter an incomplete scene, they tend to prioritize known and high-confidence information, using it to progressively infer the unknown. As illustrated in Fig.\ref{figure1}, an attribute-missing graph can be analogized to a partially disassembled jigsaw puzzle: while the adjacency relations between pieces (i.e., the graph structure) remain intact, the patterns on some pieces (i.e., node attributes) are missing.

Based on these observations, there is an urgent need for a graph imputation scheme that supports hierarchical processing and is tailored to different types of attribute-missing nodes. We propose a novel approach, termed \textbf{D}ivide-\textbf{T}hen-\textbf{R}ule \textbf{G}raph \textbf{C}ompletion (DTRGC), which iteratively imputes nodes by first processing those with sufficient known neighborhood information, treating the imputed results as new knowledge, and leveraging clustering information to correct imputation errors. DTRGC operates in three stages: First, Dynamic Cluster-aware Feature Propagation (DCFP) initializes missing node attributes and adjusts propagation weights based on the clustering structure. This enhances feature transmission among intra-cluster nodes and reduces interference from heterogeneous nodes. Next, Hierarchical Neighborhood-Aware Imputation (HNAI) categorizes nodes into three types based on attribute completeness within their neighborhoods. It applies distinct imputation strategies to refine the imputation process. Finally, Hop-wise Representation Enhancement (HRE) enriches node representations by gradually expanding the neighborhood scope and integrating multi-hop information. 

The main contributions of our work are as follows:

\begin{itemize} 
\item This paper presents DTRGC, the first framework tailored for clustering on attribute-missing graphs via hierarchical processing, in which nodes are classified based on the completeness of their neighborhood attributes and are imputed using category-specific strategies.

\item Three core components are developed within DTRGC: DCFP, HNAI and HRE. These modules operate synergistically, progressively imputing missing node attributes and ensuring that each node category receives appropriate treatment, thus enhancing overall imputation quality.

\item Experimental evaluations on six widely used graph datasets show that DTRGC improves the clustering performance of all tested DGC methods on the attribute-missing graphs. 

%此外，DTRGC 不仅在聚类任务中表现优异，在半监督节点分类任务中同样展示了出色的泛化能力，表明其在多种图学习任务中具有广泛的适用性。 
\end{itemize}

\section{Related Work}

\subsection{Attribute-Complete Deep Graph Clustering}
In recent years, DGC has made significant advancements in handling attribute-complete graphs. Existing methods are broadly classified into two categories: (1) Graph Autoencoder-based Methods. These methods utilize decoders to reconstruct the original features from the learned node embeddings \cite{tao2019adversarial, pan2019learning,tu2021deep,tu2022initializing, cai2021learning}. For instance, the Embedding Graph AutoEncoder (EGAE) integrates $k$-means clustering with autoencoder training \cite{zhang2022embedding}. This integration enhances the consistency between node representations and clustering. (2) Contrastive Learning-based Methods. These methods optimize node embeddings by creating positive and negative sample pairs, aiming to pull positive samples closer while pushing negative samples apart in the embedding space \cite{liuyue_DCRN,liuyue_HSAN,liuyue_Dink_net,liuyue_SCGC,liuyue_IDCRN}. The Simple Contrastive Graph Clustering (SCGC) improves clustering efficiency through simplified data augmentation and a streamlined architecture \cite{liuyue_SCGC}. Similarly, Peng et al. \cite{peng2023dual} introduced the Dual Contrastive Learning Network (DCLN), which incorporates dual mechanisms to prevent feature collapse and enhance the discriminative power of node embeddings. 

\subsection{Attribute-Missing Graph Imputation}
The objective of attribute-missing graph imputation is to enable effective learning when node attributes are partially or entirely absent. Existing approaches can be broadly categorized into two classes. (1) GCN-variant Models, directly operate on attribute-incomplete graphs and recover missing features during training. These methods typically extend Graph Convolutional Networks (GCNs) \cite{kipf2016semi,10158394,10634240,9272360} to handle incomplete attributes. For example, the Structural Attribute Transformer (SAT) \cite{chen2020learning} models the joint distribution of graph structure and node features via distributional learning. GCNMF \cite{taguchi2021graph} integrates a Gaussian mixture model to address missing features, while PaGNN \cite{jiang2020incomplete} proposes a partial MP scheme using only available attributes. 
(2) Feature Imputation Methods, reconstructs missing attributes prior to learning, without relying on parameterized encoders or training \cite{jin2022amer, yoo2022accurate}.  A classical method is label propagation (LP) \cite{xiaojin2002learning}, which infers unknown labels by diffusing known ones, but it requires labels and is unsuitable for clustering. To overcome this limitation, recent methods such as Feature Propagation (FP) \cite{rossi2022unreasonable} and Pseudo-Confidence Feature Imputation (PCFI) \cite{um2023confidence} leverage graph topology to recover missing features. FP performs feature diffusion, while PCFI assigns confidence scores to each imputed feature.

\section{Proposed Method}

\begin{figure*}[htbp]
\centering
\includegraphics[width=1\linewidth]{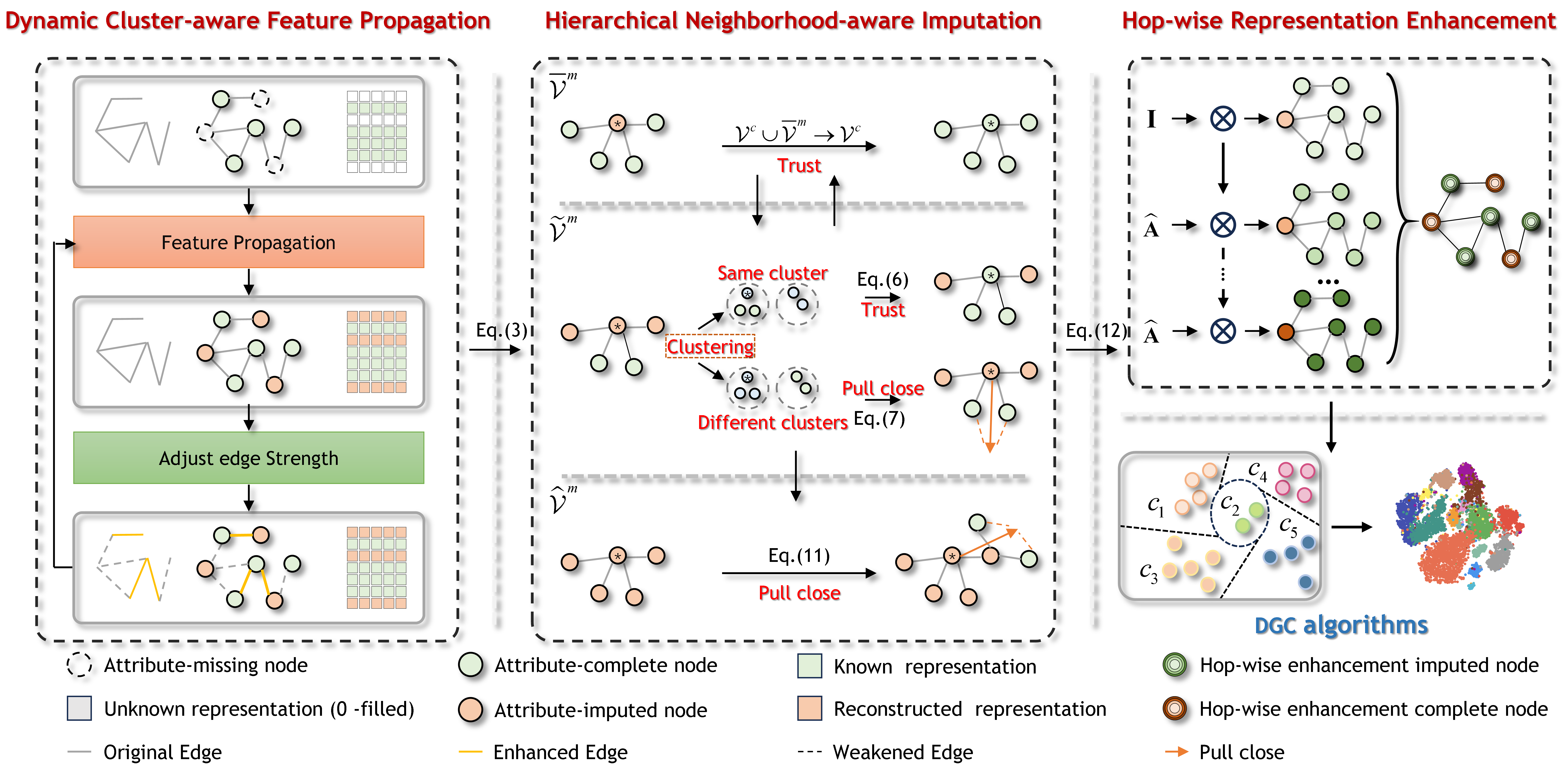}
\caption{The overall architecture of the DTRGC. In this framework, we first initialize the missing node attributes while adjusting propagation weights based on cluster structures to enhance intra-cluster propagation and suppress inter-cluster propagation. Next, the attribute-missing nodes are categorized into three groups based on the completeness of their neighborhood attributes, and distinct imputation strategies are applied to refine the imputation process for each group. Finally, multi-hop propagation is utilized to extend the neighborhood scope.}
\label{figure2}
\end{figure*}

\subsection{Notations and Problem Definition}

\textbf{Notations.} \textit{Let $\mathcal{G} = \{\mathcal{V},\mathcal{E}, \mathbf{H}\}$ denote an undirected graph, where $\mathcal{V}=\{v_{n}\}_{n=1}^{N}$ is the vertex set with $N$ nodes, $\mathcal{E}$ is the edge set, $\mathbf{H} \in\mathbb{R}^{N\times d}$ is the node attribute matrix where $d$ denotes the feature dimension of node. $\mathbf{A}$ is the adjacency matrix representing the relationships between nodes. Specifically, ${a}_{ij}=1$ if there is an edge between nodes $v_i$ and $v_j$, and $a_{ij}=0$ otherwise.}

\noindent
\textit{\textbf{Definition 1.}} \textbf{ (Attribute-Missing Graph)}
\textit{ The attribute-missing graph refers to a graph in which certain nodes lack attributes entirely, meaning that some nodes have no associated feature representations. Thus, the nodes set $\mathcal{V}$ of the graph can be divided into two subsets: the attribute-complete nodes $\mathcal{V}^{c}=\{v_{n}^{c}\}_{n=1}^{N^{c}}$ and the attribute-missing nodes $\mathcal{V}^{m}=\{v_{n}^{m}\}_{n=1}^{N^{m}}$. Accordingly, $\mathcal{V}=\mathcal{V}^{c}\cup \mathcal{V}^{m},\mathcal{V}^{c}\cap \mathcal{V}^{m}=\emptyset $ and $N=N^{c}+N^{m}$.
Based on these notations, an attribute-missing graph can be formulated as $\widetilde{\mathcal{G}}=\{\mathcal{V}^{m},\mathcal{E}\}$.}

\noindent
\textit{\textbf{Definition 2. }}\textbf{(Attribute-Missing Graph Clustering)}\textit{ Given an unlabeled attribute-missing graph $\widetilde{\mathcal{G}}=\{\mathcal{V}^{m},\mathcal{E}\}$ with $K$ clusters, the objective is to impute cluster-compatible features for the attribute-missing nodes.}

% \subsection{Theoretical Limitations of AMGC}

% The current state-of-the-art method for attribute-missing graph clustering, AMGC, directly utilizes the raw attribute-missing graph as the initialization step, followed by alternating imputation and clustering phases to achieve imputation results that are conducive to effective clustering. However, our analysis reveals a significant flaw in this approach: directly encoding graphs without imputation can degrade the quality of the representation for attribute-complete nodes.

% During the MP phase, zero vectors in attribute-missing nodes dilute the feature representations of attribute-complete nodes. This dilution primarily arises from the normalization inherent in the MP phase.

% Consider the aggregation process for an attribute-complete node: \begin{equation}
%     \mathbf{x}_i^{(k)}=\frac{\mathbf{x}_i^{(k-1)}}{de_i+1}+\sum_{j\in\mathcal{N}(i)}\frac{\mathbf{x}_j^{(k-1)}}{\sqrt{(de_i+1)(de_j+1)}},
% \end{equation}where $\mathbf{x}_i^{(k)}$ denotes the feature representation of node $i$ at the $k$-th iteration, $\mathcal{N}(i)$ is the neighbors of node $i$, and $de_i$ and $de_j$ represent the degrees of nodes $i$ and $j$, respectively.

% If any neighbor $j$ is an attribute-missing node, then $\mathbf{x}_j^{(k-1)} = \mathbf{0}$, indicating that it does not contribute any non-zero values to $\mathbf{x}_i^{(k)}$. However, the inclusion of these terms in the sum results in a dilution of the intrinsic features of node $i$ due to the normalization factor $de_i + 1$.

\subsection{Divide-Then-Rule Graph Completion}
As previously discussed, existing methods often fail to account for the varying amounts of available information across node neighborhoods, leading to unreliable results, especially for nodes with insufficient known neighborhood. The working mechanisms of DTRGC are detailed in this section, with implementation specifics provided in Algorithm~\ref{alg:DTRGC}.

\subsubsection{Dynamic Cluster-aware Feature Propagation}

DCFP dynamically adjusts the propagation intensity based on whether nodes belong to the same cluster, thereby reinforcing feature propagation within clusters while suppressing feature diffusion across clusters.
In each iteration, node features are propagated according to the normalized adjacency matrix, followed by a backtracking step to ensure that attribute-complete nodes remain unchanged. 
\begin{equation}\label{eq4_simple}
\mathbf{H}^{(t+1)}=\hat{\mathbf{A}}\mathbf{H}^{(t)},\qquad 
\mathbf{H}^{(t)}_c\equiv\mathbf{H}^{(0)}_c,\;\forall\,t\ge0.
\end{equation}

To leverage cluster structural information effectively, DCFP performs $k$-means clustering after every $T$ iteration and subsequently adjusts the adjacency matrix to reflect the updated cluster structure. Specific adjustment as follows:
\begin{equation}
    \label{eq:update_adj}
    \begin{aligned}
    {\hat{{a}}_{ij}^{(t+1)}}= 
    \begin{cases}
    \alpha\hat{{a}}_{ij}^{(t)}, & \text{if } c_i = c_j \text{ and }t\text{ mod }\, T = 0 \\
    \beta\hat{{a}}_{ij}^{(t)}, & \text{if } c_i \neq c_j \text{ and }t\text{ mod }\, T = 0 \\
    \hat{{a}}_{ij}^{(t)}, & \text{otherwise},
    \end{cases}
    \end{aligned}
\end{equation}
where $c_i$ and $c_j$ represent the clusters to which nodes $i$ and $j$ belong, respectively. Setting $\alpha>1$ strengthens intra‑cluster propagation, while $\beta<1$ weakens inter‑cluster propagation.

Rossi et al. \cite{rossi2022unreasonable} show this update converges and specify the needed iterations, so we set $T=40$ in this work. Feature propagation and adjacency updating then alternate: the adjusted matrix $\hat{\mathbf{A}}_{ij}^{(t)}$ is used for the next $T$ iterations.
\subsubsection{Hierarchical Neighborhood-Aware Imputation}

Although DCFP achieves alternating optimization of clustering and imputation, it does not fully utilize the neighborhood structural information for targeted imputation. HNAI remedies this by progressively imputing: it first fills information‑rich nodes and then uses them to guide the sparse ones.

Specifically, HNAI classifies attribute-missing nodes into three categories based on their 1-hop neighborhood: (1) $\hat{\mathcal{V}}^{m}$, where all neighbors are attribute-complete; (2) $\tilde{\mathcal{V}}^{m}$, where some neighbors are attribute-complete and others are attribute-missing; and (3) $\hat{\mathcal{V}}^{m}$, where all neighbors are attribute-missing. These categories are illustrated in Fig.~\ref{figure1}, and the classification process is formally defined by the following equation:
\begin{equation}
\label{eq6}
    \begin{aligned}
    v_i\in\mathcal{V}^{m}=
    \begin{cases}
    \bar{\mathcal{V}}^{m}&\mathrm{if~}\forall v_j\in \mathcal{N}(i), v_j\in \mathcal{V}^{c}\\
   \tilde{\mathcal{V}}^{m}&\mathrm{if~}\exists v_j\in \mathcal{N}(i), v_j\in \mathcal{V}^{c}\\
    \hat{\mathcal{V}}^{m} &\mathrm{if~}\forall v_j\in \mathcal{N}(i), v_j\in \mathcal{V}^{m},
    
    \end{cases}
    \end{aligned}
\end{equation}
where $v_i$ denotes an attribute-missing node, $v_j$ represents a neighboring node of the attribute-missing node, and $\mathcal{N}(i)$ indicates the 1-hop neighborhoods of node $i$. 

Subsequently, specific imputation strategies are applied to each category of attribute-missing nodes. The iterative process involves: (1) imputing $\hat{\mathcal{V}}^{m}$, (2) imputing $\tilde{\mathcal{V}}^{m}$, and (3) reclassification using Eq.~(\ref{eq6}). Finally, $\hat{\mathcal{V}}^{m}$ nodes, the most challenging to impute, are estimated last.

\noindent\textbf{Nodes with All Known Neighbors} $\hat{\mathcal{V}}^{m}$. 
Based on the homogeneity assumption, these neighboring nodes typically belong to the same cluster during the clustering process. Thus, $\hat{\mathcal{V}}^{m}$ can be directly treated as attribute-complete nodes, as justified in \textbf{Proposition 1}:

\begin{equation} \label{eq:7} \mathcal{V}^c \leftarrow \mathcal{V}^c \cup \hat{\mathcal{V}}^{m}. \end{equation}

\begin{proposition}
For any $v_i \in \hat{\mathcal{V}}^{m}$, its feature imputation is based on a non-negative linear combination of its fully observed neighbors in the 1-hop neighborhood. Therefore, it is reasonable to treat it as an attribute-complete node.
\end{proposition}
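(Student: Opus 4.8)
The plan is to trace exactly how the DCFP iteration of Eq.~(\ref{eq4}) acts on a node $v_i \in \mathcal{V}^{m}_\text{all-known}$ and to show that its imputed attribute is, by construction, a non-negative linear combination of the frozen attributes of its attribute-complete neighbors. First I would record two structural facts. (i) Every entry of the (possibly cluster-adjusted) normalized propagation matrix $\hat{\mathbf{A}}^{(t)}$ is non-negative: it is the degree-normalization of the non-negative matrix $\mathbf{A}$, then rescaled on selected entries by the positive constants $\alpha>1$ and $\beta\in(0,1)$ in Eq.~(\ref{eq:update_adj}), and neither rescaling can change a sign. (ii) By the backtracking step $\mathbf{H}^{(t+1)}_c = \mathbf{H}^{(0)}_c$, the feature of any attribute-complete node is constant across iterations, i.e. $\mathbf{h}_j^{(t)} = \mathbf{h}_j^{(0)}$ for every $v_j \in \mathcal{V}^{c}$ and every $t$.

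Next I would expand the single update row for $i$. Writing $\mathbf{h}_i^{(t+1)} = \sum_j \hat{a}_{ij}^{(t)}\, \mathbf{h}_j^{(t)}$ and using the defining property of $\mathcal{V}^{m}_\text{all-known}$, namely $\mathcal{N}(i) \subseteq \mathcal{V}^{c}$, every neighbor term is a frozen vector, so the off-diagonal part of the sum equals $\sum_{j \in \mathcal{N}(i)} \hat{a}_{ij}^{(t)}\, \mathbf{h}_j^{(0)}$ with non-negative coefficients by fact (i). If the operator carries no self-weight on $i$, this is already the claimed non-negative combination and the argument terminates after one propagation step, since the neighbor features never change again.

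The main obstacle is the self-dependence: if $\hat{\mathbf{A}}$ contains a positive self-weight $\hat{a}_{ii}$, the update also feeds back $v_i$'s own still-missing value, so a single pass does not isolate the neighbor contribution. To handle this I would invoke the convergence of the iteration, established for the analogous scheme by Rossi et al. and already cited after Eq.~(\ref{eq:update_adj}), to pass to the fixed point $\mathbf{h}_i^{\star}$. There the recurrence becomes $(1-\hat{a}_{ii})\,\mathbf{h}_i^{\star} = \sum_{j \in \mathcal{N}(i)} \hat{a}_{ij}\,\mathbf{h}_j^{(0)}$, and since a properly normalized propagation matrix obeys $0 \le \hat{a}_{ii} < 1$, dividing by the positive scalar $1-\hat{a}_{ii}$ preserves non-negativity. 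Hence $\mathbf{h}_i^{\star} = \sum_{j\in\mathcal{N}(i)} \frac{\hat{a}_{ij}}{1-\hat{a}_{ii}}\,\mathbf{h}_j^{(0)}$ is a non-negative linear combination of fully observed neighbor attributes, which is the first assertion of the proposition. I expect the only delicate points to be verifying $\hat{a}_{ii}<1$ for the specific normalization used and confirming $\beta>0$ so that non-negativity survives the cluster reweighting; both are mild and can be stated as standing assumptions.

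Finally I would close with the interpretive step. Under the homogeneity assumption invoked in the text, the neighbors $\mathcal{N}(i) \subseteq \mathcal{V}^{c}$ share the cluster membership of $v_i$, so the imputed $\mathbf{h}_i^{\star}$ is a blend of observed vectors drawn entirely from $v_i$'s own cluster; its reliability therefore matches that of genuinely observed attributes. This justifies promoting such nodes via $\mathcal{V}^{c} \leftarrow \mathcal{V}^{c} \cup \mathcal{V}^{m}_\text{all-known}$ as in Eq.~(\ref{eq:7}), completing the argument.
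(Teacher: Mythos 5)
Your proposal is correct and follows essentially the same route as the paper's own proof: expand the propagation update of Eq.~(\ref{eq4}) at $v_i$, use the facts that every neighbor of $v_i$ is attribute-complete and frozen by the backtracking step and that the (cluster-adjusted) propagation weights are non-negative, conclude that the imputed feature is a non-negative combination $\sum_{v_j\in\mathcal{N}(i)}\hat{a}_{ij}\mathbf{x}_j$ lying in the convex region spanned by the neighbors' features, and then appeal to distance-based clustering under homogeneity to justify the promotion $\mathcal{V}^c \leftarrow \mathcal{V}^c \cup \mathcal{V}^{m}_\text{all-known}$. Your fixed-point handling of a possible self-weight $\hat{a}_{ii}$ is in fact more careful than the paper, which simply writes the one-step sum over $\mathcal{N}(i)$ with coefficients in $(0,1)$ summing to at most $1$ and asserts the convex-hull property directly.
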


\begin{proof}
For any $v_i \in \hat{\mathcal{V}}^{m}$, its imputation process entirely depends on the attribute-complete nodes in its 1-hop neighborhood. According to Eq.(\ref{eq4_simple}), the imputed features of a node can be represented as a non-negative linear combination of its attribute-complete neighbors, normalized by the adjacency matrix. Specifically, for any $v_i \in \hat{\mathcal{V}}^{m}$, the imputed feature $\mathbf{x}_i$ can be written as: 
\begin{equation} 
\label{eq11} 
    \mathbf{x}_i = \sum_{v_j \in \mathcal{N}(i)}\hat{a}_{ij} \mathbf{x}_j, 
\end{equation} 
where $\hat{a}_{ij} \in (0,1)$ and $\sum\limits_{v_j \in \mathcal{N}(i)}\hat{a}_{ij} \leq 1$. Thus, this implies that the imputed node feature $\mathbf{x}_i$ lies within the convex hull defined by the features of its neighbors, ensuring that it is located within the feature space formed by these neighbors.

In clustering, distance-based clustering algorithms (e.g., $k$-means) group points that are closer together into the same cluster. As a result, the imputed node $v_i$ correctly reflects its position within the cluster structure and maintains consistency with its neighbors. 
\end{proof}

\noindent\textbf{Nodes with Partially Known Neighbors} $\tilde{\mathcal{V}}^{m}$. The completion of this category of nodes relies on two sources of information: (1) direct propagation from attribute-complete neighbors within their 1-hop neighborhood; and (2) indirect transmission of information from distant attribute-complete nodes through attribute-missing neighbors. To improve imputation accuracy, we apply the following refinement strategy for this category of nodes:

\noindent\textit{Intra-cluster Reinforcement Strategy (IRS).} If a $\tilde{\mathcal{V}}^{m}$ node and all its attribute‑complete neighbors lie in the same cluster, the homophily assumption holds. We therefore deem its imputed feature reliable and promote the node to the attribute‑complete set $\mathcal{V}^c$:
\begin{equation}
    \begin{aligned}
    \label{eq:13}
    &\mathcal{V}^c \leftarrow \mathcal{V}^c \cup \{v_i\}, \quad
    \forall v_i \in \tilde{\mathcal{V}}^{m}
\\
        &\text{if~} \, \forall v_j \in \mathcal{N}_\text{known}(i), c_i = c_j.
    \end{aligned}
\end{equation}

\noindent\textit{Inter-cluster Correction Strategy (ICS).} When a $\tilde{\mathcal{V}}^{m}$ node and all attribute-complete neighbors do not belong to the same cluster, this significantly violates the homogeneity assumption for graphs. To mitigate this effect, we update the feature of this node using EMA \cite{haynes2012exponential}, enabling it to gradually move toward the centroid of the features from attribute-complete neighbors, as described by the following equation:
\begin{equation}
    \begin{aligned}
    \label{eq:12}
   &x_i \leftarrow \gamma x_i + (1 - \gamma) \cdot 
    \mathrm{mean}(\mathcal{N}_\text{known}(i)), \forall v_i \in \tilde{\mathcal{V}}^{m} 
\\
        &\mathrm{if~} \, \forall v_j \in \mathcal{N}_\text{known}(i), c_i \neq c_j,
    \end{aligned}
\end{equation}
where $\gamma$ represents the smoothing coefficient for EMA.

It is important to note that the IRS and ICS may trigger a chain reaction within the neighborhood of certain ${v_i} \in \tilde{\mathcal{V}}^{m}$, causing nodes categorized as $\tilde{\mathcal{V}}^{m}$ and $\hat{\mathcal{V}}^{m}$ within this neighborhood to undergo corresponding state transitions, as demonstrated in \textbf{Proposition 2}.

\begin{proposition}
When a node $v_i \in \tilde{\mathcal{V}}^{m}$ is reclassified as an attribute-complete node (i.e., $v_i \in \mathcal{V}^{c}$):

\begin{enumerate}[label=(\Roman*)]
    \item The neighboring nodes in $\tilde{\mathcal{V}}^{m}$ \textbf{may} be reclassified as $\hat{\mathcal{V}}^{m}$.
    \item The neighboring nodes in $\hat{\mathcal{V}}^{m}$ \textbf{will certainly} be reclassified as $\tilde{\mathcal{V}}^{m}$.
\end{enumerate}
\end{proposition}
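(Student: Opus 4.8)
The plan is to read Proposition~2 as a statement about how the neighbor-completeness classification in Eq.~(\ref{eq6}) responds to a single membership change, namely moving $v_i$ from $\mathcal{V}^m$ into $\mathcal{V}^c$. The first thing I would record is a locality principle: relabelling $v_i$ as attribute-complete flips the complete/missing status of exactly one entry — $v_i$ itself — inside the neighbor set of every node adjacent to $v_i$, and leaves every node $v_k \notin \mathcal{N}(i)$ completely untouched, since no other node's status changes. Hence it suffices to examine the neighbors $v_k \in \mathcal{N}(i)$, and for each such $v_k$ the only change is that $v_i$ migrates from $v_k$'s missing-neighbor set to its complete-neighbor set while all remaining neighbors keep their status. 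With this reduction in hand, both parts reduce to counting how many missing neighbors of $v_k$ survive the update.

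For part (I), I would take $v_k \in \mathcal{V}^m_\text{some-known} \cap \mathcal{N}(i)$. By Eq.~(\ref{eq6}), before the update $v_k$ has at least one complete neighbor and at least one missing neighbor, with $v_i$ among the missing ones. After $v_i$ joins $\mathcal{V}^c$, node $v_k$ becomes all-known precisely when $v_i$ was its unique missing neighbor; if $v_k$ has any other attribute-missing neighbor, it still meets the some-known condition and stays in $\mathcal{V}^m_\text{some-known}$. Since both configurations are realizable depending on $v_k$'s other neighbors, the transition to $\mathcal{V}^m_\text{all-known}$ is possible but contingent — exactly the \emph{may} in the statement.

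For part (II), I would take $v_k \in \mathcal{V}^m_\text{all-unknown} \cap \mathcal{N}(i)$. Before the update every neighbor of $v_k$ is missing, so $v_k$ has no complete neighbor and the all-unknown predicate $\forall v_j \in \mathcal{N}(k),\, v_j \in \mathcal{V}^m$ holds. Once $v_i$ is reclassified we have $v_i \in \mathcal{V}^c \cap \mathcal{N}(k)$, so this universal predicate fails and $v_k$ can no longer be all-unknown — this is the step that makes the conclusion \emph{certain} rather than merely possible, in contrast with part (I). Because $v_i$ is the only neighbor whose status changed and all of $v_k$'s remaining neighbors are still missing, $v_k$ now has precisely the mixture of one complete and at least one missing neighbor required by the some-known case, and therefore lands in $\mathcal{V}^m_\text{some-known}$.

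The one place I expect friction is the boundary of part (II): the claim that a still-missing neighbor survives implicitly uses $\deg(v_k) \ge 2$. If $v_k$'s only neighbor were $v_i$, then after the update all of $v_k$'s neighbors would be complete and $v_k$ would jump directly to $\mathcal{V}^m_\text{all-known}$ rather than $\mathcal{V}^m_\text{some-known}$. I would therefore state the ``certainly some-known'' conclusion under the mild nondegeneracy assumption that each all-unknown node has at least two neighbors, which guarantees a surviving missing neighbor; this is the natural regime, since a missing node whose single neighbor is also missing is a degenerate pendant. Making this hypothesis explicit — or equivalently phrasing (II) as ``certainly leaves $\mathcal{V}^m_\text{all-unknown}$, and becomes some-known whenever a second missing neighbor remains'' — is the crux that keeps the statement precise, while the rest of the argument is the routine neighbor-count case analysis sketched above.
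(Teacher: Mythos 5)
Your proof is correct and follows the same overall strategy as the paper's: reduce to the neighbors of $v_i$ (no other node's classification can change) and do a case analysis on how many attribute-missing neighbors each affected node retains. Part (I) is argued identically in both — the dichotomy between ``$v_i$ was the unique missing neighbor'' and ``other missing neighbors remain'' is exactly the paper's Case 1 / Case 2.

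Where you genuinely go beyond the paper is the boundary case in part (II). The paper's proof simply observes that after the update a former all-unknown neighbor $v_j$ has an attribute-complete neighbor, so it ``no longer meets the definition of $\mathcal{V}^{m}_\text{all-unknown}$,'' and concludes it must land in $\mathcal{V}^{m}_\text{some-known}$. That last inference is exactly the step you flag: it silently assumes $v_j$ retains at least one attribute-missing neighbor, i.e.\ $\deg(v_j) \ge 2$. For a pendant node whose only neighbor is $v_i$, all of its neighbors are complete after the update, so by Eq.~(\ref{eq6}) it is reclassified as $\mathcal{V}^{m}_\text{all-known}$, not $\mathcal{V}^{m}_\text{some-known}$ — the paper's claim (II), read literally, fails there, while your nondegeneracy hypothesis (or your alternative phrasing ``certainly leaves $\mathcal{V}^{m}_\text{all-unknown}$'') repairs it. The gap is harmless for the algorithm, since jumping directly to $\mathcal{V}^{m}_\text{all-known}$ is an even stronger state transition and the proposition's purpose (showing that all-unknown nodes certainly gain usable neighborhood information) survives, but your version is the more precise statement and the paper's proof would benefit from your caveat.
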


\begin{proof}
Consider a node $v_i \in \tilde{\mathcal{V}}^{m}$ to which either the IRS or the ICS has been applied. According to these strategies, if certain conditions are met, $v_i$ will be reclassified as an attribute-complete node and added to the $\mathcal{V}^c$:
\begin{equation}
    \mathcal{V}^c \leftarrow \mathcal{V}^c \cup \{v_i\}.
\end{equation}

\noindent\textbf{Effect on Neighboring $\tilde{\mathcal{V}}^{m}$.} Let $v_j$ be a neighbor of $v_i$, where $v_j \in \tilde{\mathcal{V}}^{m}$. After $v_i$ is updated to an attribute-complete node ($v_i \in \mathcal{V}^c$), we consider the impact on $v_j$:

\begin{itemize}
    \item \textbf{Case 1:} $v_i$ is the only attribute-missing neighbor of $v_j$.\\
In this case, all neighbors of $v_j$ are now attribute-complete nodes:
    \begin{equation}
        \forall v_k \in \mathcal{N}(v_j), \quad v_k \in \mathcal{V}^c.
    \end{equation}

According to Eq.(\ref{eq6}), $v_j$ is then reclassified as $\hat{\mathcal{V}}^{m}$.
    \item \textbf{Case 2:} $v_j$ has other attribute-missing neighbors in addition to $v_i$.\\
Here, $v_j$ will still have both attribute-complete and attribute-missing neighbors, and thus will remain in $\tilde{\mathcal{V}}^{m}$.
\end{itemize}

\noindent\textbf{Effect on Neighboring $\hat{\mathcal{V}}^{m}$.} Consider a neighboring node $v_j \in \mathcal{N}(v_i)$, where initially $v_j \in \hat{\mathcal{V}}^{m}$. After $v_i$ is reclassified as an attribute-complete node (i.e., $v_i \in \mathcal{V}^c$), $v_j$ will gain at least one attribute-complete neighbor, specifically $v_i$:
\begin{equation} 
    \exists v_i \in \mathcal{N}(v_j), \quad v_i \in \mathcal{V}^c.
\end{equation}

In this scenario, $v_j$ no longer meets the definition of being in $\hat{\mathcal{V}}^{m}$, as it now has an attribute-complete neighbor. Therefore, $v_j$ is reclassified as a node in $\tilde{\mathcal{V}}^{m}$.

\end{proof}

\noindent\textbf{Nodes with All Unknown Neighbors} $\hat{\mathcal{V}}^{m}$. As analyzed in \textbf{Proposition 2}, a significant proportion of nodes initially classified as $\hat{\mathcal{V}}^{m}$ are eventually reclassified as $\tilde{\mathcal{V}}^{m}$. This result is also quantitatively reflected in Section \ref{Clustering Performance Comparison (Q1)}. For the remaining $\hat{\mathcal{V}}^{m}$ nodes, their imputation is derived entirely from neighbors that are themselves attribute-missing. Referring to the aforementioned ICS, if a node $v_i \in \hat{\mathcal{V}}^{m}$ has neighbors in $\mathcal{N}_\text{some-known}(i)$ that do not belong to the same cluster, EMA is employed to iteratively adjust the feature of $v_i$, driving it towards the centroid of its $\tilde{\mathcal{V}}^{m}$ neighbors:

\begin{equation}
    \begin{aligned}
    \label{eq:17}
   &x_i \leftarrow \gamma x_i + (1 - \gamma) \cdot 
    \mathrm{mean}(\mathcal{N}_\text{some-known}(i)), \forall v_i \in \bar{\mathcal{V}}^{m} 
\\
        &\mathrm{if~} \, \forall v_j \in \mathcal{N}_\text{some-known}(i), c_i \neq c_j,
    \end{aligned}
\end{equation}
where $\mathrm{mean}(\mathcal{N}_\text{some-known}(i))$ denotes the mean feature vector of all neighboring nodes classified as $\tilde{\mathcal{V}}^{m}$. 

 \begin{algorithm}[t]
    \caption{Divide-Then-Rule Graph Completion (DTRGC)}
    \label{alg:DTRGC}
    \renewcommand{\algorithmicrequire}{\textbf{Input:}} 
    \renewcommand{\algorithmicensure}{\textbf{Output:}} 
    \begin{algorithmic}[1]
        \REQUIRE 
        Feature matrix $\mathbf{H}$; Normalized adjacency matrix $\hat{\mathbf{A}}$; Node sets $\mathcal{V}^{m}$, $\mathcal{V}^{c}$; 
        Maximum epochs ${F_{\text{max}}}$, $I_{\text{max}}$; Maximum propagation step $K$.
        \ENSURE 
        Completed and enhanced feature matrix $\mathbf{H}_E \in \mathbb{R}^{n \times (K+1) d}$
        
\texttt{/* Stage 1: DCFP */}
        % Stage 1: Dynamic Cluster-aware Feature Propagation (DCFP)
        \FOR{$t = 1$ to $F_{\text{max}}$}
            \STATE Update $\mathbf{H}^{(t+1)} \leftarrow \hat{\mathbf{A}} \mathbf{H}^{(t)}$
            
            \STATE Restore $\mathbf{H}^{(t+1)}_c \leftarrow \mathbf{H}^{(0)}_c$
            \IF{$t \bmod 40 == 0$}
                \STATE Update $\hat{\mathbf{A}}$ by Eq.(\ref{eq4_simple})
            \ENDIF
        \ENDFOR
        
        \texttt{/* Stage 2: HNAI */} 
        % Stage 2: Hierarchical Neighborhood-aware Imputation (HNAI)
        \FOR{$i = 1$ to $I_{\text{max}}^{(H)}$}
            \STATE Obtain $\hat{\mathcal{V}}^{m}$, $\tilde{\mathcal{V}}^{m}$, $\hat{\mathcal{V}}^{m}$ by Eq.(\ref{eq:update_adj})
            \STATE Update $\hat{\mathcal{V}}^{m}$ by Eq.(\ref{eq:7})
            \FOR{each node $v \in \tilde{\mathcal{V}}^{m}$}
                \STATE Update $\tilde{\mathcal{V}}^{m}$ by Eq.(\ref{eq:12})
                \STATE Update $\tilde{\mathcal{V}}^{m}$ and $\mathcal{V}^{m}$ by Eq.(\ref{eq:13})
            \ENDFOR
        \ENDFOR
        
        % Handling All Unknown Neighbors
       
        \FOR{each node $v \in \hat{\mathcal{V}}^{m}$}
             \STATE Update $\hat{\mathcal{V}}^{m}$ by Eq.(\ref{eq:17})
        \ENDFOR
         
         \texttt{/* Stage 3: HRE */} 
        % Stage 3: Hop-wise Representation Enhancement (HRE)
        \FOR{$k = 0$ to $K$}
            \STATE Update $\mathbf{H} \leftarrow \hat{\mathbf{A}} \mathbf{H}$ % Propagate features through the adjacency matrix
            \STATE Update $\mathbf{H}_E \leftarrow [\mathbf{H}_E; \mathbf{H}]$ % Append current features to enhanced matrix
        \ENDFOR
        
       \STATE \textbf{return} $\mathbf{H}_E$ % Return the enhanced feature matrix
    \end{algorithmic}
\end{algorithm}
\subsubsection{Hop-wise Representation Enhancement}

Despite the improvements in feature imputation for attribute-missing nodes achieved by DCFP and HNAI, the features of attribute-complete nodes remain static. Moreover, effective graph clustering often depends on aggregating multi-hop neighborhood information beyond intrinsic features.

To address this, we propose Hop-wise Representation Enhancement (HRE), which captures multi-scale context by progressively performing MP and concatenating features from multiple hops. The final representation is formed by concatenating the features from all hops:
\begin{equation}
\mathbf{H}_E = [\mathbf{H}^{(0)}, \hat{\mathbf{A}}\mathbf{H}^{(0)}, \dots, \hat{\mathbf{A}^{(K)}}\mathbf{H}^{(0)}]^\top \in \mathbb{R}^{n \times (K+1)d}.
\end{equation}
This formulation ensures that each node representation $\mathbf{s}_i \in \mathbb{R}^{(K+1)d}$ encodes hierarchical neighborhood information for downstream clustering.

\subsection{Complexity Analysis}

This section analyzes the computational complexity of the three main components of the proposed DTRGC framework: DCFP, HNAI, and HRE. The total complexity is the sum of the individual components: $\mathcal{O}\Big(K_1 |E| d + \frac{K_1}{T} n c d i + K_2 n c d i + K_2 n d_\text{avg} + K_3 |E| d + K_3 n d \Big)$, where $K_1$ is the number of propagation steps in DCFP, $|E|$ is the number of edges, and $d$ is the feature dimension of the node attribute matrix $\mathbf{H}$, $T$ is the number of iterations for adjusting the adjacency matrix in DCFP, $n$ is the number of nodes, $c$ is the number of clusters, and $i$ is the number of $k$-means iterations, $d_\text{avg}$ is the average degree of nodes, and $K_2$, $K_3$ are the number of iterations for HNAI and HRE, respectively. This total complexity can be simplified to $\mathcal{O}(n + |E|)$, indicating that the proposed DTRGC exhibits a linear relationship with both the number of nodes and the number of edges. 

\section{Experiments}
This section describes the experimental setup and evaluates  DTRGC by answering the following questions:

\noindent\textbf{Q1:} How does the performance of different DGC improve with integrating DTRGC under conditions of missing attributes? 

\noindent\textbf{Q2:} What is the contribution of each component to clustering performance?

\noindent\textbf{Q3:} How does DTRGC perform in comparison to node classification baselines on attribute-missing graphs?

\noindent\textbf{Q4:} What are the time and memory costs of DTRGC?

\noindent\textbf{Q5:} How does the hierarchical processing in DTRGC alleviate issues related to nodes with all unknown neighbors being interconnected?

\subsection{Experimental Settings}

\textbf{Datasets.} To facilitate a comprehensive comparison of the proposed model, six benchmark datasets are selected, including: Cora, Citeseer, Pubmed, Amap, Co.CS, and Co.Physics.

\noindent\textbf{Compared Methods.} To evaluate the imputation effectiveness of DTRGC, several state-of-the-art DGC methods are selected, and their clustering performance is compared in two scenarios: (1) the original clustering performance on attribute-missing graphs; and (2) the clustering performance after DTRGC imputation. This comparison directly assesses the impact of DTRGC on improving clustering performance. The methods included in this comparison are GDCL \cite{zhao2021graph}, AGCN \cite{peng2021attention}, AGC-DRR \cite{gong2022attributed}, HSAN \cite{liuyue_HSAN}, CCGC \cite{yang2023cluster}, and AMGC \cite{tu2024attribute}.

\noindent\textbf{Experiment Setup.} To ensure a fair comparison, 10 experimental iterations are conducted under identical conditions, and the corresponding average results are reported. All experiments run on hardware with a 24GB RTX 3090 GPU and 64GB RAM. The comparison experiments are performed with a 0.6 missing rate, meaning 40\% of node attributes in the original graph are randomly retained, while all attributes of the remaining 60\% of nodes are removed and set as zero vectors. Further details on the experimental setup follow the settings in AMGC \cite{tu2024attribute}. 

\subsection{Clustering Performance Comparison (Q1)}
\label{Clustering Performance Comparison (Q1)}

\begin{table*}[!t]
\caption{The clustering performance on six benchmark datasets (mean \(\pm\) std). The best and runner-up results are highlighted with \textbf{bold} and \underline{underline}, respectively. ``OOM'' means the out-of-memory failure.}
\label{tb:2}
\centering
\resizebox{\textwidth}{!}{

\fontsize{10}{14}\selectfont % 精确设置字体大小和行间距
\begin{tabular}{c|c|cccccc|cccccc}
\toprule
\multirow{3}{*}{\textbf{Dataset}} & \multirow{3}{*}{\textbf{Metric}} & \multicolumn{6}{c|}{\textbf{DGC (No Completion)}} & \multicolumn{6}{c}{\textbf{DGC + DTRGC (Pre-Completion)}} \\
\cline{3-14}
 &  & \textbf{GDCL} & \textbf{AGCN} & \textbf{AGC-DRR} & \textbf{HSAN} & \textbf{CCGC} & \textbf{AMGC} & \textbf{GDCL} & \textbf{AGCN} & \textbf{AGC-DRR} & \textbf{HSAN} & \textbf{CCGC} & \textbf{AMGC} \\
 &  & \textbf{IJCAI21} & \textbf{MM21} & \textbf{IJCAI22} & \textbf{AAAI23} & \textbf{AAAI23} & \textbf{AAAI24}& \textbf{IJCAI21} & \textbf{MM21} & \textbf{IJCAI22} & \textbf{AAAI23} & \textbf{AAAI23} & \textbf{AAAI24}\\
\midrule
\multirow{4}{*}{\textbf{Cora}} & ACC & 24.76\(\pm\)1.84 & 39.88\(\pm\)3.18 & 43.26\(\pm\)7.09 & \underline{57.94\(\pm\)1.27} & 37.93\(\pm\)2.68 &  \textbf{66.65\(\pm\)2.04} & 54.23\(\pm\)1.32& 57.81\(\pm\)2.24& 62.27\(\pm\)2.46& \underline{63.81\(\pm\)1.29}& 54.29\(\pm\)1.54& \textbf{70.66\(\pm\)1.25}\\
 & NMI & 4.74\(\pm\)2.21 & 19.16\(\pm\)1.43 & 23.26\(\pm\)8.51 & \underline{42.01\(\pm\)1.24} & 22.37\(\pm\)3.03 & \textbf{47.99\(\pm\)1.67} & 34.21\(\pm\)0.74& 35.59\(\pm\)2.23& 38.24\(\pm\)3.57& \underline{44.12\(\pm\)0.91}& 37.43\(\pm\)0.31& \textbf{56.29\(\pm\)1.14}\\
 & ARI & 0.74\(\pm\)1.04 & 14.09\(\pm\)3.75 & 16.23\(\pm\)8.41 & \underline{33.14\(\pm\)1.78} & 11.45\(\pm\)2.81 & \textbf{43.40\(\pm\)1.90} & 27.41\(\pm\)1.32& 30.21\(\pm\)1.44& 34.54\(\pm\)4.29& \underline{38.75\(\pm\)1.92}& 32.68\(\pm\)1.44& \textbf{46.51\(\pm\)1.46}\\
 & F1 & 7.82\(\pm\)2.16 & 33.96\(\pm\)2.64 & 31.09\(\pm\)8.21 & \underline{58.77\(\pm\)0.89} & 36.91\(\pm\)4.12 & \textbf{61.02\(\pm\)2.38} & 35.46\(\pm\)1.22& 37.21\(\pm\)2.88& 44.51\(\pm\)4.41& \underline{60.24\(\pm\)1.22}& 55.43\(\pm\)0.47& \textbf{66.31\(\pm\)1.11}\\
\midrule
\multirow{4}{*}{\textbf{Citeseer}} & ACC & 27.39\(\pm\)2.19 & 42.22\(\pm\)2.78 & 30.95\(\pm\)4.26 & \underline{44.22\(\pm\)0.52} & 39.63\(\pm\)1.95 & \textbf{60.92\(\pm\)2.01} & 54.26\(\pm\)1.07& 56.13\(\pm\)1.32& 51.17\(\pm\)0.84& \underline{60.34\(\pm\)1.77}& 55.64\(\pm\)2.88& \textbf{64.77\(\pm\)1.31}\\
 & NMI & 7.53\(\pm\)2.38 & 19.01\(\pm\)1.74 & 13.26\(\pm\)3.41 & \underline{22.29\(\pm\)1.39} & 17.13\(\pm\)1.34 & \textbf{32.93\(\pm\)1.71} & 21.45\(\pm\)1.74& 27.17\(\pm\)2.21& 24.97\(\pm\)2.33& \underline{33.27\(\pm\)2.47}& 25.31\(\pm\)0.75& \textbf{37.44\(\pm\)1.24}\\
 & ARI & 2.25\(\pm\)1.08 & 13.31\(\pm\)2.17 & 2.35\(\pm\)3.43 & \underline{13.32\(\pm\)1.08} & 11.83\(\pm\)2.34 & \textbf{33.73\(\pm\)2.25} & 21.45\(\pm\)2.23& 27.88\(\pm\)3.45& 24.31\(\pm\)1.87& \underline{34.45\(\pm\)1.47}& 31.471\(\pm\)1.51& \textbf{38.58\(\pm\)1.56}\\
 & F1 & 12.16\(\pm\)4.02 & 37.22\(\pm\)1.67 & 29.10\(\pm\)5.26 & \underline{42.18\(\pm\)2.54} & 38.88\(\pm\)2.54 & \textbf{57.33\(\pm\)2.05} & 38.48\(\pm\)2.87& 52.91\(\pm\)1.33& 44.31\(\pm\)4.46& \underline{58.91\(\pm\)2.14}& 54.74\(\pm\)2.31& \textbf{60.18\(\pm\)1.32}\\
\midrule
\multirow{4}{*}{\textbf{Amap}} & ACC & 43.34\(\pm\)3.15 & 43.73\(\pm\)2.81 & 61.89\(\pm\)5.96 & \underline{62.36\(\pm\)2.61} & 51.20\(\pm\)2.25 & \textbf{67.56\(\pm\)3.01}& 64.78\(\pm\)2.88& 62.61\(\pm\)3.74& \underline{72.31\(\pm\)2.27}& 71.86\(\pm\)2.34& 67.74\(\pm\)2.41& \textbf{75.65\(\pm\)2.50}\\
 & NMI & 30.94\(\pm\)6.37 & 34.63\(\pm\)4.72 & \underline{55.33\(\pm\)5.32} & 51.42\(\pm\)1.26 & 41.13\(\pm\)2.41 & \textbf{59.81\(\pm\)1.58}
& 59.27\(\pm\)4.46& 60.01\(\pm\)4.79& \underline{63.94\(\pm\)4.37}& 62.57\(\pm\)3.46& 57.61\(\pm\)2.48& \textbf{66.94\(\pm\)2.37}\\
 & ARI & 22.42\(\pm\)4.33 & 22.93\(\pm\)4.97 & \underline{41.43\(\pm\)5.18} & 41.09\(\pm\)2.43 & 30.74\(\pm\)1.06 & \textbf{49.22\(\pm\)2.75}& 48.15\(\pm\)5.78& 45.17\(\pm\)3.13& 52.23\(\pm\)5.33& \underline{54.56\(\pm\)3.15}& 49.79\(\pm\)1.78& \textbf{57.14\(\pm\)3.34}\\
 & F1 & 24.77\(\pm\)3.87 & 32.74\(\pm\)2.51 & \textbf{61.35\(\pm\)7.40} & 55.06\(\pm\)2.40 & 46.17\(\pm\)4.38 & \underline{59.49\(\pm\)3.43}& 58.15\(\pm\)2.78& 64.74\(\pm\)2.45& \underline{67.62\(\pm\)5.13}& 62.48\(\pm\)4.84& 59.41\(\pm\)3.41& \textbf{69.66\(\pm\)2.47}\\
\midrule
\multirow{4}{*}{\textbf{PubMed}} & ACC & 47.00\(\pm\)0.39 & 41.85\(\pm\)2.28 & \underline{60.71\(\pm\)1.72} & \multirow{4}{*}{OOM} & 42.70\(\pm\)1.49 & \textbf{64.56\(\pm\)1.50} & 63.48\(\pm\)0.71& 58.12\(\pm\)2.12& \underline{66.71\(\pm\)1.21}& \multirow{4}{*}{OOM} & 59.15\(\pm\)1.24& \textbf{67.81\(\pm\)2.12}\\
 & NMI & 10.84\(\pm\)0.49 & 0.93\(\pm\)1.08 & \underline{16.26\(\pm\)2.14} & & 2.43\(\pm\)1.75 & \textbf{24.58\(\pm\)2.21} & 18.48\(\pm\)0.24& 12.61\(\pm\)0.74& \underline{27.61\(\pm\)1.87}& & 14.91\(\pm\)1.69& \textbf{30.07\(\pm\)2.13}\\
 & ARI & 7.39\(\pm\)0.27 & 1.37\(\pm\)1.18 & \underline{17.35\(\pm\)2.44} & & 1.24\(\pm\)0.87 & \textbf{24.19\(\pm\)2.39} & 16.57\(\pm\)0.24& 14.71\(\pm\)1.31& \underline{23.74\(\pm\)1.71}& & 16.77\(\pm\)0.47& \textbf{29.24\(\pm\)2.65}\\
 & F1 & 28.36\(\pm\)10.56 & 36.05\(\pm\)0.79 & \underline{59.80\(\pm\)2.43} & & 34.28\(\pm\)3.16 & \textbf{64.52\(\pm\)1.26} & 54.78\(\pm\)4.31& 60.47\(\pm\)0.84& \underline{65.48\(\pm\)2.13}& & 59.41\(\pm\)1.69& \textbf{67.30\(\pm\)1.94}\\
\midrule
\multirow{4}{*}{\textbf{Co.CS}} & ACC & 40.18\(\pm\)1.63 & 52.13\(\pm\)5.88 & 59.08\(\pm\)2.26 & \multirow{4}{*}{OOM} & \underline{64.88\(\pm\)1.51} & \textbf{72.61\(\pm\)1.08} & 59.24\(\pm\)1.44& 61.07\(\pm\)4.23& 64.84\(\pm\)0.78& \multirow{4}{*}{OOM}& \underline{69.48\(\pm\)1.57}& \textbf{74.44\(\pm\)0.94}\\
 & NMI & 41.67\(\pm\)1.29 & 49.51\(\pm\)5.30 & \underline{64.16\(\pm\)1.97} & & 63.21\(\pm\)0.82 & \textbf{73.91\(\pm\)0.39} & 52.67\(\pm\)1.43& 57.81\(\pm\)1.57& 67.75\(\pm\)1.61& & \underline{68.15\(\pm\)0.91}& \textbf{75.27\(\pm\)0.46}\\
 & ARI & 15.32\(\pm\)1.68 & 41.44\(\pm\)7.43 & 49.27\(\pm\)2.86 & & \underline{55.08\(\pm\)2.22} & \textbf{64.62\(\pm\)0.66} & 44.59\(\pm\)2.04& 52.59\(\pm\)3.32& 58.61\(\pm\)2.74& & \underline{64.59\(\pm\)0.77}& \textbf{69.31\(\pm\)0.34}\\
 & F1 & 3.62\(\pm\)2.78 & 25.02\(\pm\)5.54 & 45.09\(\pm\)1.41 & & \underline{54.17\(\pm\)2.76} & \textbf{68.34\(\pm\)1.88} & 44.33\(\pm\)2.41& 54.61\(\pm\)1.67& 59.47\(\pm\)1.81& & \underline{66.87\(\pm\)1.71}& \textbf{71.03\(\pm\)2.12}\\
\midrule
\multirow{4}{*}{\textbf{Co.Physics}} & ACC & \multirow{4}{*}{OOM} & \underline{63.64\(\pm\)4.06} &\multirow{4}{*}{OOM} & \multirow{4}{*}{OOM} & \multirow{4}{*}{OOM} & \textbf{77.68\(\pm\)3.78} & \multirow{4}{*}{OOM}& \underline{66.71\(\pm\)3.94}& \multirow{4}{*}{OOM}& \multirow{4}{*}{OOM}& \multirow{4}{*}{OOM}& \textbf{80.67\(\pm\)1.32}\\
 & NMI & & \underline{34.98\(\pm\)7.68} & & & & \textbf{62.77\(\pm\)1.46} & & \underline{54.31\(\pm\)6.41}& & & & \textbf{65.17\(\pm\)1.04}\\
 & ARI & & \underline{43.94\(\pm\)6.91} & & & & \textbf{69.00\(\pm\)6.25} & & \underline{54.11\(\pm\)4.79}& & & & \textbf{72.61\(\pm\)3.21}\\
 & F1 & & \underline{43.19\(\pm\)6.61} & & & &  \textbf{67.58\(\pm\)2.68} & & \underline{54.52\(\pm\)6.15}& & & &  \textbf{69.31\(\pm\)2.41}\\
\bottomrule

\end{tabular}
}
\end{table*}

The comparison of clustering performance before and after imputation is shown in Table \ref{tb:2}, which presents the experimental results of combining DTRGC with various graph clustering methods across six datasets. The following observations can be made from these results: 

\noindent (1) Across all datasets, incorporating DTRGC consistently leads to significant improvements in clustering performance metrics. For instance, on the Cora dataset, the ACC of the AMGC model increased from $66.65\pm2.04$ to $70.66\pm1.25$, showing an absolute improvement of over 4\%.

\noindent (2) For certain DGC methods that exhibit nearly unacceptable performance under missing attribute scenarios, such as GDCL, incorporating DTRGC significantly boosts their clustering performance to an acceptable level. For example, on the PubMed dataset, the ACC of GDCL improved from $47.00\pm0.39$ to $63.48\pm0.71$. 

\noindent (3) The occurrence of out-of-memory (OOM) failures remains consistent before and after applying DTRGC, indicating that DTRGC does not introduce additional memory overhead that leads to OOM. 

\subsection{T-SNE Visualization (Q1)}

To visually assess the effectiveness of DTRGC, 2D t-distributed Stochastic Neighbor Embedding (t-SNE) \cite{van2008visualizing} is employed to compare the clustering performance of six baseline algorithms before and after imputation. As shown in Fig.\ref{t-sne}, compared to their original versions without DTRGC, the DGC methods with DTRGC imputation exhibit significantly higher intra-cluster compactness and more pronounced inter-cluster separation.

\begin{figure*}[htbp]
\centering
\includegraphics[width=1\linewidth]{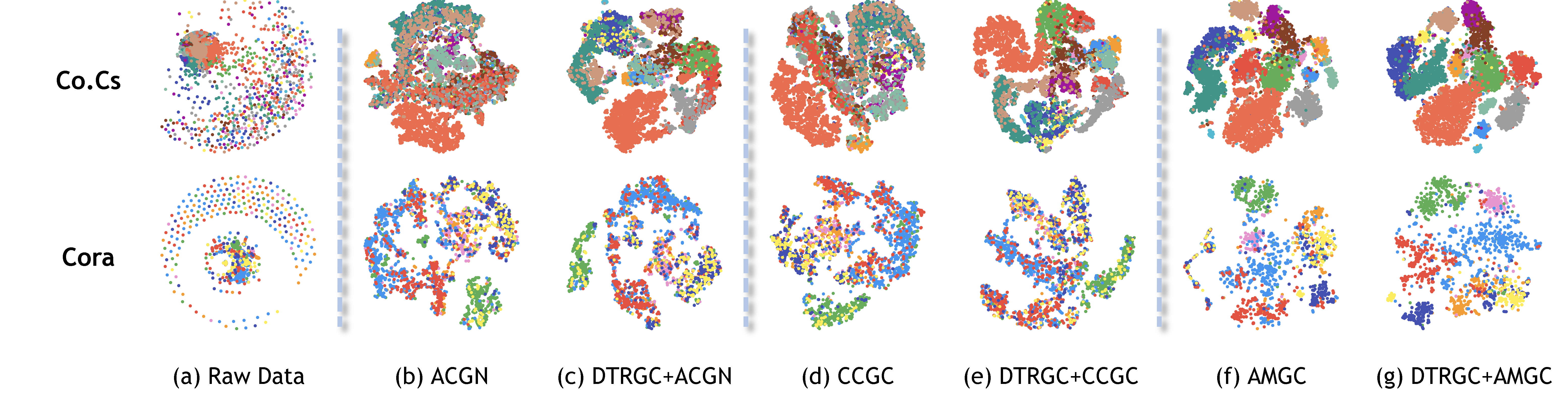}
\caption{T-SNE visualization of node embeddings.}
\label{t-sne}
\end{figure*}

\subsection{Ablation Study (Q2)}
 We progressively remove the three stages of DTRGC to create different variants, aiming to evaluate the independent contribution of each component. It is important to note that DTRGC is not a complete DGC method but rather a plug-and-play imputation module. Therefore, in the ablation experiments, we choose AMGC, which shows the best performance when combined with DTRGC, as the baseline model. 

\begin{table}[!t]
\centering
\caption{Ablation study results on four benchmark datasets, reporting clustering performance metrics (mean $\pm$ std). The best results are highlighted in \textbf{bold}.}
\label{tb:Ablation}

\tabcolsep=1pt % 控制列间距
\renewcommand\arraystretch{1} % 控制行间距
\fontsize{7}{7}\selectfont % 精确设置字体大小和行间距
\begin{tabular}{c|c|c|c|c|c}
\toprule
\textbf{Datasets} & \textbf{Method} & \textbf{ACC} & \textbf{NMI} & \textbf{ARI} & \textbf{F1} \\
\midrule
\multirow{8}{*}{Cora} & Base & 66.65$\pm$2.04 & 47.99$\pm$1.67 & 43.40$\pm$1.90 & 61.02$\pm$2.38 \\
 & w/o DCFP & 67.72$\pm$1.75 & 51.71$\pm$1.12 & 44.34$\pm$1.81 & 63.07$\pm$2.07 \\
 & w/o HNAI & 67.84$\pm$1.48 & 49.31$\pm$1.04 & 44.52$\pm$1.61 & 62.91$\pm$1.91 \\
 & w/o HRE & 69.75$\pm$1.32 & 55.48$\pm$1.63 & 45.97$\pm$1.77 & 65.81$\pm$1.41 \\
 & w/o DCFP and HNAI & 67.24$\pm$1.51 & 48.57$\pm$1.24 & 43.84$\pm$1.43 & 62.44$\pm$2.37 \\
 & w/o DCFP and HRE & 67.79$\pm$1.44 & 49.32$\pm$1.33 & 43.91$\pm$1.54 & 62.77$\pm$1.43 \\
 & w/o HNAI and HRE & 67.07$\pm$1.43 & 48.83$\pm$2.34 & 43.84$\pm$1.61 & 62.33$\pm$1.49 \\
 & DTRGC & \textbf{70.66$\pm$1.25} & \textbf{56.29$\pm$1.14} & \textbf{46.51$\pm$1.46} & \textbf{66.31$\pm$1.11} \\
\midrule
\multirow{8}{*}{Citeseer} & Base & 60.92$\pm$2.01 & 32.93$\pm$1.71 & 33.20$\pm$2.25 & 57.33$\pm$2.05 \\
 & w/o DCFP & 62.45$\pm$1.72 & 33.80$\pm$1.58 & 33.84$\pm$1.97 & 58.17$\pm$1.86 \\
 & w/o HNAI & 62.90$\pm$1.53 & 34.75$\pm$1.39 & 35.32$\pm$1.76 & 59.72$\pm$1.67 \\
 & w/o HRE & 63.41$\pm$1.45 & 36.54$\pm$1.47 & 37.13$\pm$1.69 & 59.46$\pm$1.62 \\
 & w/o DCFP and HNAI & 61.98$\pm$1.58 & 33.59$\pm$1.55 & 34.04$\pm$1.54 & 57.92$\pm$1.73 \\
 & w/o DCFP and HRE & 62.95$\pm$1.50 & 34.32$\pm$1.43 & 34.70$\pm$1.61 & 58.50$\pm$1.75 \\
 & w/o HNAI and HRE & 62.42$\pm$1.51 & 33.55$\pm$1.44 & 34.54$\pm$1.59 & 59.12$\pm$1.68 \\
 & DTRGC & \textbf{64.70$\pm$1.31} & \textbf{37.44$\pm$1.24} & \textbf{38.58$\pm$1.56} & \textbf{60.18$\pm$1.32} \\
\midrule
\multirow{8}{*}{Amap} & Base & 67.56$\pm$3.01 & 59.81$\pm$1.58 & 49.22$\pm$2.75 & 59.49$\pm$3.43 \\
 & w/o DCFP & 70.45$\pm$2.55 & 62.20$\pm$1.42 & 52.70$\pm$2.38 & 63.05$\pm$2.95 \\
 & w/o HNAI & 71.32$\pm$2.74 & 62.45$\pm$1.37 & 53.12$\pm$2.51 & 63.58$\pm$3.08 \\
 & w/o HRE & 74.11$\pm$2.62 & 65.15$\pm$1.50 & 55.80$\pm$2.63 & 67.33$\pm$3.22 \\
 & w/o DCFP and HNAI & 69.50$\pm$2.65 & 60.78$\pm$1.49 & 50.32$\pm$2.49 & 61.18$\pm$3.01 \\
 & w/o DCFP and HRE & 70.10$\pm$2.59 & 61.35$\pm$1.43 & 51.50$\pm$2.42 & 62.13$\pm$2.97 \\
 & w/o HNAI and HRE & 71.21$\pm$2.60 & 62.02$\pm$1.48 & 52.60$\pm$2.54 & 62.60$\pm$2.54 \\
 & DTRGC & \textbf{75.65$\pm$2.50} & \textbf{66.94$\pm$2.37} & \textbf{57.14$\pm$3.34} & \textbf{69.66$\pm$2.47} \\
\midrule
\multirow{8}{*}{Pubmed} & Base & 64.56$\pm$1.50 & 24.58$\pm$2.21 & 24.19$\pm$2.39 & 64.52$\pm$1.26 \\
 & w/o DCFP & 66.78$\pm$1.42 & 26.35$\pm$2.01 & 25.20$\pm$2.26 & 65.90$\pm$1.35 \\
 & w/o HNAI & 66.12$\pm$1.34 & 27.12$\pm$1.95 & 25.85$\pm$2.31 & 65.35$\pm$1.41 \\
 & w/o HRE & 66.85$\pm$1.29 & 28.65$\pm$2.03 & 28.14$\pm$2.24 & 66.74$\pm$1.38 \\
 & w/o DCFP and HNAI & 65.20$\pm$1.47 & 25.90$\pm$2.10 & 24.75$\pm$2.28 & 65.11$\pm$1.31 \\
 & w/o DCFP and HRE & 65.65$\pm$1.38 & 26.22$\pm$2.05 & 24.87$\pm$2.30 & 65.75$\pm$1.33 \\
 & w/o HNAI and HRE & 65.44$\pm$1.31 & 26.80$\pm$2.08 & 25.08$\pm$2.27 & 66.10$\pm$1.36 \\
 & DTRGC & \textbf{67.81$\pm$2.12} & \textbf{30.07$\pm$2.13} & \textbf{29.24$\pm$2.65} & \textbf{67.30$\pm$1.94} \\
\bottomrule
\end{tabular}
\end{table}

As summarized in Table \ref{tb:Ablation}, each component of the proposed model significantly contributes to the enhancement of clustering performance. Specifically, taking the accuracy (ACC) as an example, the improvements introduced by DCFP, HNAI, and HRE across the four datasets are as follows: Cora: 0.42\%, 1.14\%, and 0.59\%; Citeseer: 1.50\%, 2.03\%, and 1.06\%; Amap: 3.65\%, 2.54\%, and 1.94\%; Pubmed: 0.88\%, 1.09\%, and 0.64\%, respectively. A consistent pattern of improvement is also observed across other evaluation metrics.

Additionally, when DCFP and HNAI are jointly employed, the resulting performance gains are significantly greater compared to the individual contributions of these two components, yielding improvements of 3.10\%, 4.09\%, 6.55\%, and 3.55\% on the Cora, Citeseer, Amap, and Pubmed datasets, respectively. These findings suggest that: (1) The hierarchical imputation provided by HNAI demonstrates a synergistic effect when integrated with DCFP. (2) The enhancement contributed by HRE appears largely independent of DCFP and HNAI, indicating that its capability to improve representation quality has broader applicability.

\subsection{Node Classification Comparison (Q3)}

As a plug-and-play imputation solution, DTRGC not only performs well in clustering on attribute-missing graphs but is also applicable to a classic learning task on attribute-missing graphs—semi-supervised node classification. The experimental setup and comparison methods follow the current state-of-the-art node classification approach for attribute-missing graphs, PCFI \cite{um2023confidence}. Specifically, under a missing rate of 0.995, we compare the node classification performance of DTRGC with GCNMF \cite{taguchi2021graph}, PaGNN \cite{jiang2020incomplete}, and FP \cite{rossi2022unreasonable}. For plug-and-play imputation methods, we use vanilla GCN for the subsequent node classification tasks. 

Table \ref{tb:Node Classification Performance} presents the experimental results under a 0.995 missing rate scenario. It is evident that both DTRGC and PCFI outperform other methods across all datasets. Notably, DTRGC surpasses PCFI on the Cora, Computers, and ogbn-Arxiv datasets, achieving an accuracy of 76.12\% on Cora, 79.43\% on Computers, and 69.24\% on ogbn-Arxiv. Despite being initially designed for clustering, DTRGC demonstrates competitive performance in semi-supervised node classification, highlighting its robustness and strong generalization capability across a variety of graph learning scenarios.

\subsection{Time and Memory Consumption (Q4)}
This section is used to answer the time and memory costs of DTRGC. Since DTRGC is a preprocessing step, we do not compare it with other methods but instead report its time and memory usage. As shown in Table \ref{yuchuli-time}, the preprocessing time and memory consumption of DTRGC exhibit a linear relationship across different datasets. These overheads remain within an acceptable range for all datasets. Moreover, since DTRGC serves as a one-time preprocessing step, its computational cost is negligible in the long run.

\begin{table}[ht]
\centering
\caption{Time and memory cost on different datasets.}
\label{yuchuli-time}
\renewcommand\arraystretch{1}
\fontsize{7}{7}\selectfont
\begin{tabular}{c|c|c}  % 只保留3列，并去掉最后的竖线
\toprule
\textbf{Dataset}       & \textbf{Time Cost (s)} & \textbf{CPU Memory Cost (MB)} \\
\midrule
Cora         & 31.34                  & 43.21                         \\
CiteSeer     & 30.99                  & 42.90                         \\
Amap         & 488.71                 & 673.00                        \\
PubMed       & 246.44                 & 339.92                        \\
Co.CS        & 386.01                 & 532.13                        \\
Co.Physics   & 1089.34                & 1502.43                       \\
\bottomrule
\end{tabular}
\end{table}

\subsection{Effectiveness of Hierarchical Processing (Q5)}

To validate the effectiveness of the HNAI module within DTRGC, a set of experiments is designed to track the evolution of three types of nodes across different datasets. This aims to analyze their transformation over multiple iterations. As shown in Fig. \ref{xiajiang}, the trend in the number of three types of nodes is analyzed over multiple iterations across six datasets: Cora, PubMed, CiteSeer, CS, Amap, and Physics. The experimental results indicate that the HNAI module within DTRGC significantly reduces the number of ``nodes with partially known neighbors'' and ``nodes with all unknown neighbors'', especially in the initial iterations, where the counts of these nodes drop dramatically. As the iterations progress, subgraphs made up of interconnected ``nodes with all unknown neighbors'' gradually gain reliable information from the surrounding nodes, reducing the negative impact of indiscriminate propagation on clustering.

\begin{figure}[htbp]
\centering
\includegraphics[width=1\linewidth]{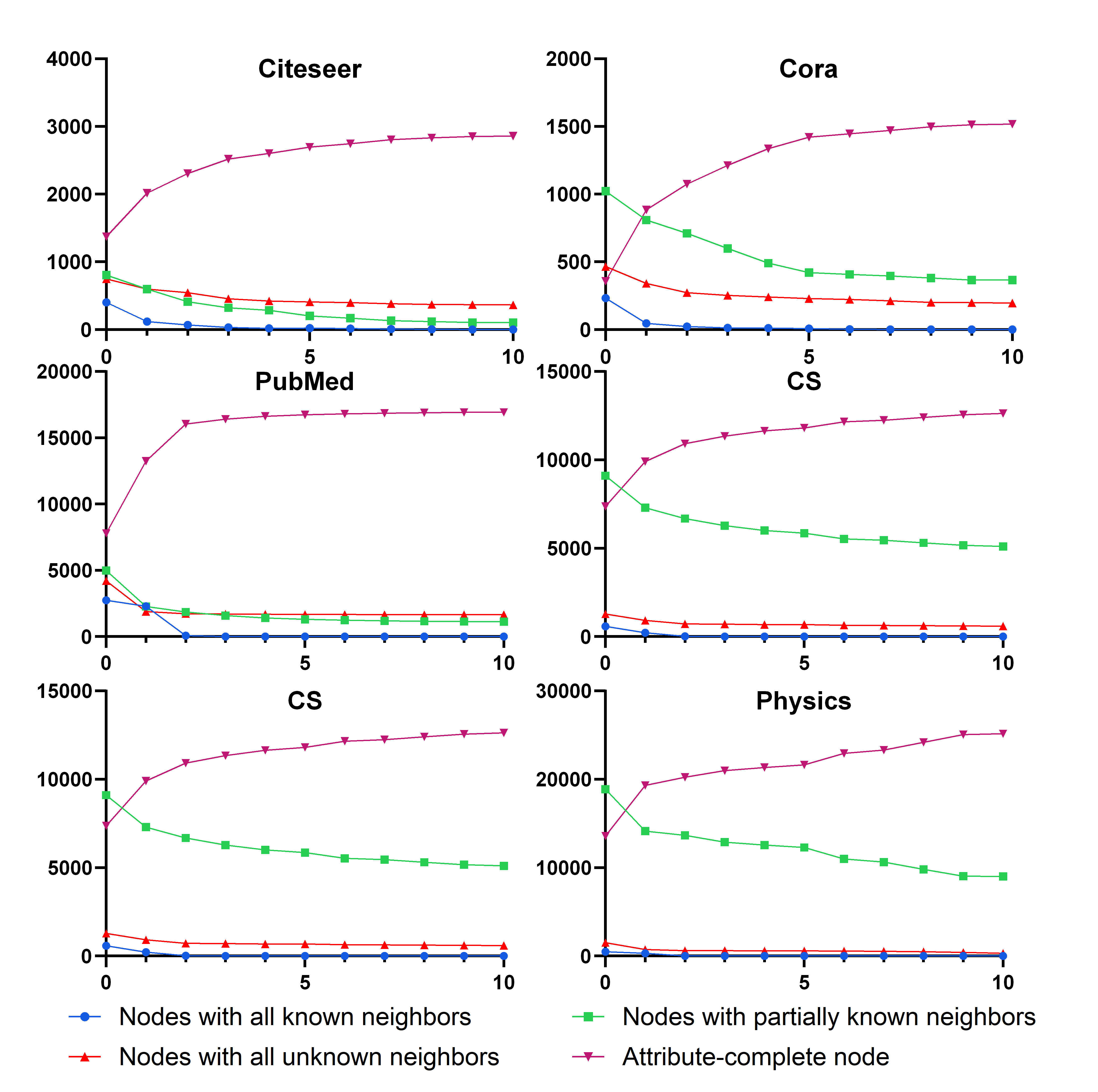}
\caption{The number of nodes for each of the three types over different iterations.}
\label{xiajiang}
\end{figure}

\section{Conclusion and Future Work}

\begin{table}[!t]
\caption{Comparison of classification accuracy across benchmark datasets for different imputation methods in semi-supervised node classification tasks with a missing rate of 0.995 (mean \(\pm\) std). The best results are highlighted in \textbf{bold}.}
\centering
\label{tb:Node Classification Performance}
\resizebox{\columnwidth}{!}{
\fontsize{8}{8}\selectfont 
\tabcolsep=2pt
\renewcommand\arraystretch{1}
\begin{tabular}{c|c|c|c|c|c}  % ✅ 保留中间线，去掉最右竖线
\toprule
Dataset    & GCNMF            & PaGNN            & FP               & PCFI                                       & DTRGC                                      \\
\midrule
Cora       & 29.20\(\pm\)0.71 & 30.55\(\pm\)8.85 & 72.84\(\pm\)2.85 & 75.49\(\pm\)2.10                           & \textbf{76.12\(\pm\)2.03} \\
Citeseer   & 24.50\(\pm\)1.52 & 25.69\(\pm\)3.98 & 59.76\(\pm\)2.47 & \textbf{66.18\(\pm\)2.75}                  & 65.34\(\pm\)1.64                           \\
PubMed     & 40.19\(\pm\)0.95 & 50.82\(\pm\)4.61 & 72.69\(\pm\)2.66 & \textbf{74.66\(\pm\)2.26}                  & 73.31\(\pm\)2.47                           \\
Photo      & 26.82\(\pm\)6.33 & 66.91\(\pm\)3.99 & 86.57\(\pm\)1.50 & \textbf{87.70\(\pm\)1.29}                  & 86.94\(\pm\)1.13                           \\
Computers  & 30.59\(\pm\)9.81 & 56.50\(\pm\)3.29 & 77.45\(\pm\)1.59 & 79.25\(\pm\)1.19                           & \textbf{79.43\(\pm\)1.45} \\
ogbn-Arxiv & OOM              & OOM              & 68.23\(\pm\)0.27 & 68.72\(\pm\)0.28                           & \textbf{69.24\(\pm\)0.47} \\
\bottomrule
\end{tabular}
}
\end{table}

This paper proposes a framework, termed DTRGC, to address challenges in clustering attribute-missing graphs. Unlike existing imputation methods that often overlook varying levels of neighborhood information availability, DTRGC prioritizes nodes with richer neighborhood information, allowing them to serve as reliable anchors for subsequent imputation of more challenging nodes. Experimental results on six widely used graph datasets confirm that DTRGC significantly enhances the clustering performance of attribute-missing graphs across various DGC methods. As a direction for future research, we intend to develop a parametric extension of DTRGC, aiming to further improve its adaptability, model complexity, and performance across a broader range of heterogeneous graph datasets and real-world applications.

\section*{Acknowledgments}
This work is supported by the Key Project of the Xiangjiang Laboratory (Nos. 23XJ02003), the National Natural Science Foundation of China (Nos. 62441618, 62325604, and 62276271), and the Natural Science Foundation of Hainan University (Nos. XJ2400009401). 

\bibliographystyle{ACM-Reference-Format}
\balance
\bibliography{sample-base}

\end{document}